\documentclass{article}



\usepackage[preprint,nonatbib]{nips_2018}



\usepackage{graphicx}
\graphicspath{{./images/}}
\DeclareGraphicsExtensions{.pdf,.png}

\usepackage[utf8]{inputenc} 
\usepackage[T1]{fontenc}    
\usepackage{hyperref}       
\usepackage{url}            
\usepackage{booktabs}       
\usepackage{amsfonts}       
\usepackage{nicefrac}       
\usepackage{microtype}      
\usepackage{amsmath}
\usepackage{xcolor}
\usepackage{amsthm} 
\usepackage{amssymb} 
\usepackage{thmtools, thm-restate}

\title{Imagining the Unseen: Learning a Distribution over Incomplete Images with Dense Latent Trees}

%

\author{
    Sebastian Kaltwang\\
    Five AI Ltd.\\
    Cambridge, U.K.\\
    \texttt{sebastian@five.ai} \\
    \And
    Sina Samangooei \\
    Five AI Ltd.\\
    Cambridge, U.K.\\
    \texttt{sina@five.ai} \\
    \And
    John Redford \\
    Five AI Ltd.\\
    Cambridge, U.K.\\
    \texttt{john@five.ai} \\
    \And
    Andrew Blake \\
    Five AI Ltd.\\
    Cambridge, U.K.\\
    \texttt{andrew@five.ai} \\
}

\begin{document}


\maketitle

\begin{abstract}
Images are composed as a hierarchy of object parts.
We use this insight to create a generative graphical model that defines a hierarchical distribution over image parts.
Typically, this leads to intractable inference due to loops in the graph.
We propose an alternative model structure, the Dense Latent Tree (DLT), which avoids loops and allows for efficient exact inference, while maintaining a dense connectivity between parts of the hierarchy.
The usefulness of DLTs is shown for the example task
of image completion on partially observed MNIST \cite{lecun1998mnist} and Fashion-MNIST \cite{xiao2017fashion} data.
We verify having successfully learned a hierarchical model of images by visualising its latent states.
\end{abstract}

\section{Introduction}


Hierarchical structures are abundant in natural images. Scenes are composed of objects, which are composed of parts, which are composed of smaller parts and so on.
In this work we propose a generative graphical model over image pixels that has this hierarchical composition at its core, called Dense Latent Trees (DLTs).
Take for example the MNIST digits \cite{lecun1998mnist}: each digit consists of parts (curves and lines), which consist of edges (at different angles), which consist of pixels (black or white).
The larger parts are composed from a spatially arranged set of smaller parts. Furthermore, parts are often self-similar, and thus can be shared between the larger parts.
E.g. the digits 6 and 9 can share the same `circle' component, but placed at a different spatial position. Also, each of the circles can share the same type of edges.
Since a part can potentially occur at any position in the image, we account for all overlapping positions with a `dense' distribution of parts.
Typically, this dense connectivity implies a non-tree shaped graphical model with intractable exact inference (e.g. \cite{Salakhutdinov2009Deep}).
In Sec.~\ref{sec:model}, we propose an alternative DLT structure with efficient inference and sampling algorithms.
The main idea is to use a densely connected (non-tree) model as template and then break up all loops via duplicating child nodes with multiple parents.
Each parent gets its own separate child, which results in tree structured graph. See Sec.~\ref{sec:structure} for details.

The utility of such a generative image model is many-fold, and includes classification, clustering, in-painting and super-resolution.
We pick one example task and show in Sec.~\ref{sec:experiments} that our model is able to complete missing image parts in a plausible way.
As a special case, we are able to \emph{train} our model with missing image parts, i.e. we can handle problems where complete images are not available.
Finally, we visualise the learned parts to verify that the model has successfully captured a hierarchical distribution over image parts.

Our contributions include:
(1) creating a graphical model that defines images as a distribution of overlapping image parts,
(2) learning of the image parts without supervision from incomplete data,
(3) formulating efficient inference, learning and sampling in linear time despite quadratically many random variables, and
(4) showing a detailed analysis of the learned part distributions and the efficacy of the proposed structure for image completion on two datasets with randomly missing parts.

\section{Related Work}

%

Previous work on image models include Quad-trees (e.g. \cite{bouman1994multiscale}), which also feature a hierarchical decomposition of images.
Input pixels are split into \emph{non-overlapping} patches of size 2x2 and each hidden node with its children can be seen as a distribution over object parts.
The model is able to express a strong joint distribution for pixels within the same image patch.
However, neighbouring pixels in different patches have a weak correlation, depending on the distance to their lowest common ancestor.
This leads to block artefacts with discretised borders.
In contrast to that, DLTs partition the image into \emph{overlapping} patches, and thus model a strong connection between all neighbouring pixels pairs.
Slorkey and Williams \cite{slorkey2003image} overcome the fixed quad-tree structure by treating node connectivity as a random variable, which is estimated during inference.
This allows for dynamically placing the image parts within the upper layers, but overlap between parts is not possible.

%
Another strand of research has used the Restricted Boltzmann Machine (RBM) to create a distribution over images with dense connectivity between pixels \cite{Salakhutdinov2009Deep,eslami2014shape}.
Each of the hidden variables can be seen as modelling one factor in the input image parts. The extension  \cite{eslami2012generative} models explicitly overlapping parts within a hierarchy. 
While able to natively handle missing inputs, the main drawback of RBMs its intractable inference which results in slow approximate sampling and learning algorithms.

Topic models have been proposed for images \cite{sudderth2005learning}, where objects are decomposed of latent parts. In contrast to DLT, there is only a single latent layer for the parts, i.e. there is no hierarchy for decomposing larger parts into multiple smaller parts.


%
Various recent work has utilised CNNs for creating a generative distribution over images: Variational Auto-encoders (VAEs) \cite{kingma2013auto}, Generative Adversarial Networks (GANs) \cite{goodfellow2014generative} and the PixelCNN \cite{van2016conditional}.
While excelling at sampling images from a prior or a fixed conditional input, all of the methods require complete data during training.
The encoder (in VAEs and GANs) and conditioning CNN (PixelCNN) require all inputs being present.
Moreover, these methods do not model a distribution of parts: VAEs and GANs learn a deterministic map from a multi-dimensional random variable to the image space.
PixelCNNs model a distribution of pixels, conditioned on the previously generated image parts.

%
%
%

\section{Dense Latent Trees}\label{sec:model}

A dense latent tree (DLT) is a parametric probabilistic graphical model, that defines a joint probability distribution $p$ over a set of
discrete random variables $\mathbf{X}=(\mathbf{O},\mathbf{H})$. The subset $\mathbf{O}$ is observed and the remaining set $\mathbf{H}$ is hidden or latent.
 The likelihood for $\mathbf{O}$ is given by:
\begin{equation}
p(\mathbf{O})=\sum_{\mathbf{H}}p(\mathbf{O},\mathbf{H})
\end{equation}
where the sum extends over all states of all variables in $\mathbf{H}$.
The model is constructed by combining multiple applications of the core structure called ``kernel''%
\footnote{We denote the core structure of DLT as ``kernel'' in analogy to CNNs, since it is repeatedly applied with shared parameters at different spatial locations.}
(see Sec.~\ref{sec:kernel}) to form the overall DLT structure (see Sec.~\ref{sec:structure}). 
Then we explain how to obtain the likelihood (in Sec.~\ref{sec:likelihood}), learn the parameters (in Sec.~\ref{sec:learning}) and sample from the DLT (in Sec.~\ref{sec:sampling}).

\subsection{Kernel}\label{sec:kernel}

The kernel is defined by a probabilistic graphical model, where the nodes are random variables and the edges are parametric conditional probability distributions.
For simplicity, we chose categorical distributions for all nodes.
However, any choice with tractable marginal inference would be feasible. The kernel has $K$ input nodes
$\mathbf{x}_{k}$, $k \in \{0,..., K-1\}$, each of them representing one pixel or an image part.
The inputs are connected to a latent node $\mathbf{y}$ as parent, which also has a parametric prior.
This parent represents the ``part of parts'' in the image composition. 
Fig.~\ref{fig:Structure_kernel} shows an example for $K=3$. We denote the edge parameters also as ``weights'' of the kernel.
\begin{figure}[htbp]
    \centering
    \includegraphics[scale=0.45]{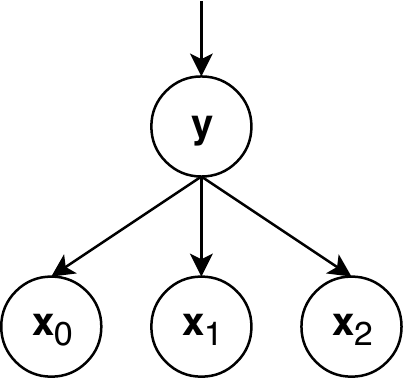}
    \caption{Example of a DLT kernel, which is repeatedly applied with shared weights at each layer.}
    \label{fig:Structure_kernel}
\end{figure}
The model describes a mixture of Categorical variables and is also known as Latent Class Analysis \cite{lazarsfeld1968latent}.
It is fully tractable and we can easily calculate the likelihood, sample, train the parameters and infer a posterior distribution given an observed subset of the nodes using belief propagation (BP) \cite{pearl1988probabilistic}.

\subsection{Structure}\label{sec:structure}
The nodes of a DLT are organized in layers. 
The input image is represented as a set of observed nodes $\mathbf{O}$ in the lowest layer. 
$\mathbf{O}$ is spatially arranged in 2-D, where each node corresponds to a pixel. 
The learned parts are represented as hidden nodes $\mathbf{H}$, which are spread over the upper layers. In case of partially observed inputs, the unobserved pixel in the lowest layer become hidden and thus a part of $\mathbf{H}$. 
To improve readability, we explain the DLT structure on the example of the simplest case, i.e. 1-D images as observed input, but the same principles apply for N-D inputs. See Fig.\ref{fig:Structure_experiments} for an example 2-D structure.

Given a DLT layer, the layer above is constructed by multiple repetitions of the same kernel (see Sec.~\ref{sec:kernel}) at all spatial locations.
Assuming that the kernel has learned the distribution of specific object parts, the repeated application of the kernel accounts for all possible positions of these parts.
Starting from the observed nodes in the lowest layer, all layers above are then constructed recursively.

A naive dense structure for a 1-D input of size 7 (in layer 1) and kernels of size 3 is shown in Fig.~\ref{fig:Structure_cnn}.
Kernel weights at different spatial locations are shared, which is indicated by the edge colours.
This structure contains multiple parents for the inner nodes and thus induces loops for the BP message flow (Note: the arrows only indicate the conditional relationship between nodes, messages are passed in both directions).
Loops radically change the tractability of a graphical model:
inference is not exact any more and we would need to resort to costly approximate algorithms, like loopy BP or  contrastive divergence.
Even worse, many approximate inference algorithms do not have any convergence guarantees.
Instead, we design our model to form a tree, and thus reach guaranteed convergence with a single BP pass.

\begin{figure}[htbp]
    \centering
    \begin{minipage}[t]{.4\textwidth}
        \centering%
        \vspace{0pt}%
        \includegraphics[scale=0.45]{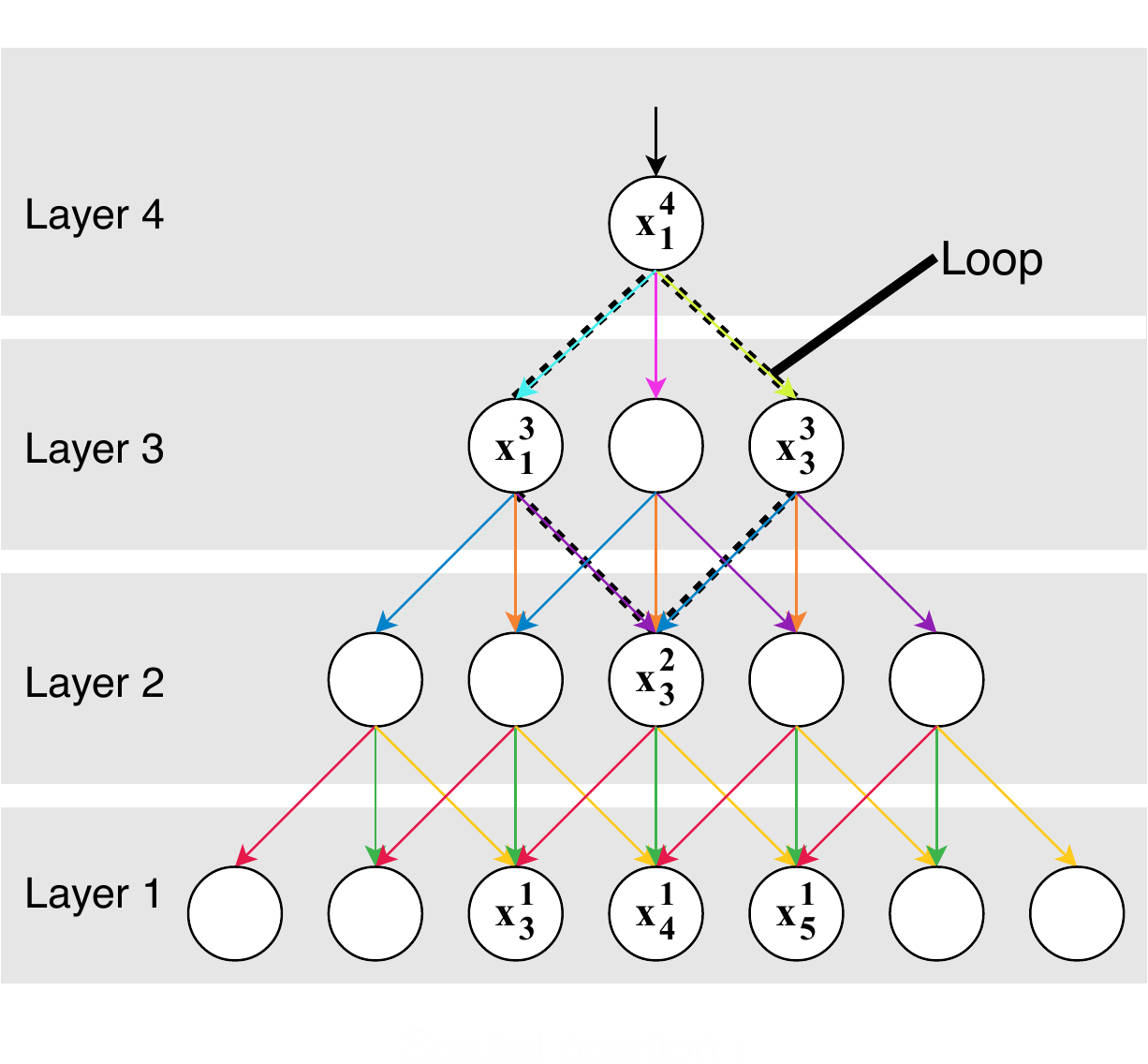}%
        \caption{
            Naive dense structure (similar to a CNN structure), \emph{contains loops}.
            Nodes depict random variables and arrows are conditional probability distributions.
            The kernels have size 3 and the colours indicate shared weights.
        }
        \label{fig:Structure_cnn}
    \end{minipage}\hspace{.025\textwidth}
    \begin{minipage}[t]{.575\textwidth}
        \centering%
        \vspace{0pt}%
        \includegraphics[scale=0.45]{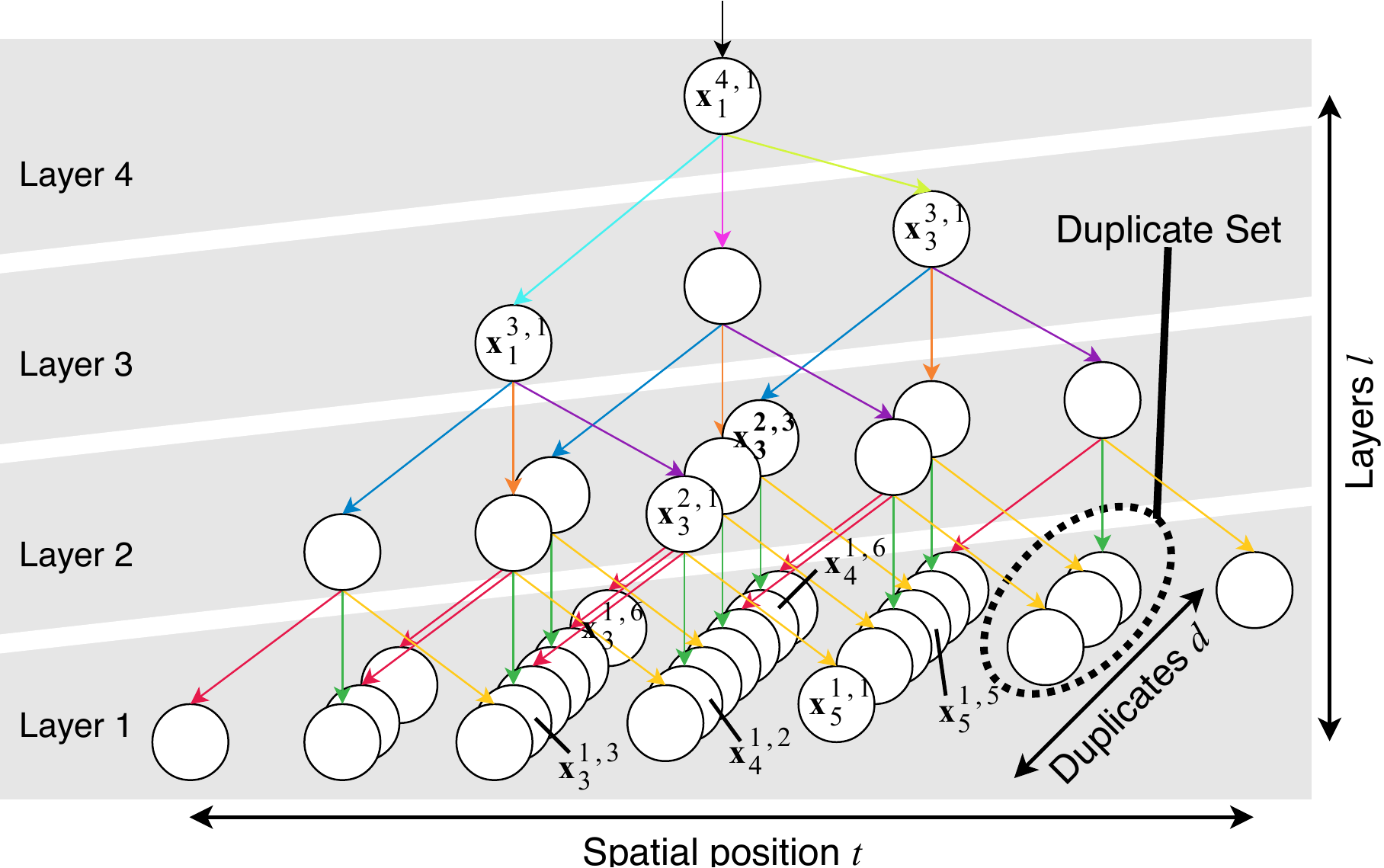}%
        \caption{Efficient DLT structure, \emph{does not contain loops}. The kernels have size 3 and edges with shared weights have the same colour. Nodes in a duplicate set overlap each other and are placed along a diagonal. There are 7 duplicate sets in layer 1, thus the number of unique inputs is the same as in Fig.~\ref{fig:Structure_cnn}.}
        \label{fig:Structure_clt}
    \end{minipage}
\end{figure}

In contrast to the naive example in Fig.~\ref{fig:Structure_cnn}, we design the DLT structure to avoid loops, while at the same time having a dense connectivity between layers.
Instead of connecting multiple kernels to the same input, we \emph{duplicate} the input node including the whole sub-tree below it
(i.e. including all ancestors, edges and input observations).
By doing this, each kernel gets its exclusive duplicate as input and thus no node has multiple parents.
As an example, take the loop in Fig.~\ref{fig:Structure_cnn} between the nodes $\mathbf{x}^4_1$, $\mathbf{x}^3_1$, $\mathbf{x}^3_3$ and $\mathbf{x}^2_3$.
This loop is caused by the multiple parents of node $\mathbf{x}^2_3$.
Instead of connecting $\mathbf{x}^3_1$ and $\mathbf{x}^3_3$ to the same node, each of them gets its own duplicate,
which are $\mathbf{x}^{2,1}_3$ and $\mathbf{x}^{2,3}_3$ respectively (see Fig.~\ref{fig:Structure_clt}).
Furthermore, duplicating the node $\mathbf{x}^{2,1}_3$ includes duplicating its whole sub-tree, and thus $\mathbf{x}^{2,1}_3$ gets its specific duplicate children $\mathbf{x}^{1,3}_3$, $\mathbf{x}^{1,2}_4$ and $\mathbf{x}^{1,1}_5$.
Analogously, $\mathbf{x}^{2,3}_3$ gets $\mathbf{x}^{1,6}_3$, $\mathbf{x}^{1,6}_4$ and $\mathbf{x}^{1,5}_5$ as children.

All nodes that are duplicates of each other are called a ``duplicate set''.
Crucially, each duplicate set shares the same kernel weights (indicated by the same colour of the edges).
This is necessary to keep the number of parameters constant and it leads to an efficient inference algorithm, see Sec.~\ref{sec:likelihood}. 
Fig.~\ref{fig:Structure_clt} shows a DLT for a 1-D input of size 7 and kernels of size 3. Note that all nodes within a duplicate set in layer 1 receive the same observed data as input (and thus there are only 7 unique inputs), but this constraint is \emph{not} encoded in the model.

Let $\mathbf{x}_{t}^{l,d}$ be the $d$th duplicate node at spatial
position $t$ in layer $l$, with $l\in\{1,...,L\}$, $t\in\{1,...,T^{l}\}$,
$d\in\{1,...,D_{t}^{l}\}$ (for brevity, we write $l\in L$ instead).
$L$ is the number of layers, $T^{l}$ is the spatial size of layer
$l$ and $D_{t}^{l}$ is the number of duplicates at position $t$
in layer $l$. Hence, the total number of spatial positions is $T=\sum_{l}T^{l}$ and the total number of nodes is $D=\sum_{l}\sum_{t\in T^{l}}D_{t}^{l}$.
Furthermore, let $K^l$ be the spatial size of the kernel in layer $l$.

Take for example a 1-D DLT with a constant kernel size of $K^{l}=4$ and stride%
\footnote{We denote the spatial step size of the kernel as ``stride'', in analogy to CNNs.}
2.
The described DLT structure leads to quadratic growth for the number of nodes $D$ in respect to a 1-D input image with $N$ pixels:
Given this structure, there are closed form solutions for $T^{l}=3(2^{L-l})-2$,
$T=3(2^{L})-2L-3$ and $D=\frac{1}{3}(2^{2L}-1)$
(for the derivation see Appx.~\ref{sec:derivation-of-closed-forms} in the supplement).
We need a spatial size of $N$ for the pixel input in layer $1$, i.e. $N=T^{1}=3(2^{L-1})-2$.
Thus the number of layers is $L=1+\log\frac{N+2}{3}$ and the total number of nodes is quadratic in $N$ with
$D=\frac{1}{3}(2^{2L}-1)=\frac{4}{27}\left(N+2\right)^{2}-\frac{1}{3}\in\mathcal{O}(N^{2})$.
This is problematic for reasonably large image inputs.
In the next section we develop an inference algorithm that is linear in the number of inputs $N$  for both runtime and memory complexity.

\subsection{Likelihood}\label{sec:likelihood}
This section describes how to calculate the marginal likelihood $\mathcal{L}$ for a given observation of $\mathbf{O}$ and fixed weights.
Following that each node (except the root) has exactly one parent, the joint likelihood of all nodes $\mathbf{X} = (\mathbf{O}, \mathbf{H}$) is given by
\begin{equation}
p(\mathbf{X})=\prod_{\mathbf{x}\in\mathbf{X}}p(\mathbf{x}|\mathbf{x}_{\textrm{par}})
\end{equation}
where $\mathbf{x}_{\textrm{par}}$ is the parent of $\mathbf{x}$.
Thus the model forms a latent tree, and the exact marginal likelihood
can be calculated by a single BP pass through
the tree \cite{pearl1988probabilistic}. We chose to start the BP from the leaves up to the root.

We denote the incoming BP message into node $\mathbf{x}_{t}^{l,d}$
as $\mathbf{m}_{t}^{l,d}=p(\mathbf{O}_{\mathbf{x}_{t}^{l,d}}|\mathbf{x}_{t}^{l,d})$,
where $\mathbf{m}_{t}^{l,d}=(m_{t,1}^{l,d},...,m_{t,F^{l}}^{l,d})$,
$F^{l}$ is the number of states for all nodes in layer $l$ and $\mathbf{O}_{\mathbf{x}}=\{\mathbf{o}|\mathbf{o}\in\mathbf{O}\textrm{ and }\mathbf{o}\textrm{ is ancestor of }\mathbf{x}\}$.
The BP is initialised by setting all messages of the input layer $\mathbf{m}_{t}^{1,d}$
to the respective observations for $\mathbf{x}_{t}^{1,d}\in\mathbf{O}$.
We assume a uniform distribution for all unobserved inputs $\mathbf{x}_{t}^{1,d}\notin\mathbf{O}$.
Then the BP recursively calculates all $\mathbf{m}_{t}^{l,d}$ of
the layers above, until we reach the root $\mathbf{m}_{1}^{L,1}$(for
the root $t=d=1$, since it has no duplicates and only a single spatial
position).
Assuming $w_{f}^{L}$ are the prior weights of the root, the marginal likelihood can be calculated
from the message at the top layer $L$:
\begin{equation}
\mathcal{L}=\sum_{f}w_{f}^{L}m_{1,f}^{L,1}
\end{equation}
In total, this algorithm would need to calculate the $\mathbf{m}_{t}^{l,d}$
messages $\forall l\in L$,$\forall t\in T^{l}$,$\forall d\in D_{t}^{l}$,
resulting in $D$ calculations.
We note that the message for a certain node depends only on the sub-tree
defined by the same node, which includes the connected nodes
and weights from the layers below and not the ones above. By design,
sub-trees defined by nodes from the same duplicate set have the \emph{same structure}, the \emph{same weights} and receive the \emph{same observations}
at their leaves.
Thus, each duplicate must receive the \emph{same message}:

\begin{restatable}{theorem}{thmduplicates}
\label{thm:duplicates}
The messages $\mathbf{m}_{t}^{l,d}$ for all duplicate nodes are equivalent to the unique message $\mathbf{u}_{t}^{l}$,
i.e. $\forall l\in L$,$\forall t\in T^{l}$,$\forall d\in D_{t}^{l}$:
$\mathbf{m}_{t}^{l,d}=\mathbf{u}_{t}^{l}$.
\end{restatable}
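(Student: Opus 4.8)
The plan is to prove the claim by induction on the layer index $l$, running from the leaves ($l=1$) up to the root ($l=L$). The key structural fact, already observed in the text just before the theorem, is that the incoming message $\mathbf{m}_t^{l,d}$ into a node depends only on the sub-tree rooted at that node — its descendants, the connecting edges, the weights along those edges, and the observations sitting at its leaves — and is entirely independent of anything above the node. So the proof amounts to formalising that two nodes in the same duplicate set have sub-trees that agree in all three respects (structure, weights, leaf observations), and then checking that the BP update is a deterministic function of exactly this data.

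First I would set up the base case: for $l=1$, a node $\mathbf{x}_t^{1,d}$ is a leaf, and its message is initialised directly from the observation at spatial position $t$ (or to the uniform distribution if that pixel is unobserved). Since every duplicate in the layer-$1$ set at position $t$ receives the same observed datum by construction of the DLT, all these initial messages coincide; call the common value $\mathbf{u}_t^1$. Next I would state the BP recursion precisely: for a non-leaf node $\mathbf{x}_t^{l,d}$ with children $\mathbf{x}_{t_1}^{l-1,d_1},\dots,\mathbf{x}_{t_{K^l}}^{l-1,d_{K^l}}$ and kernel weights $w$ shared across the duplicate set, the outgoing (upward) message is obtained by the standard product-then-marginalise step, $m_{t,f}^{l,d} = \prod_{i=1}^{K^l} \bigl( \sum_{f'} p(\mathbf{x}_{t_i}^{l-1,d_i}=f' \mid \mathbf{x}_t^{l,d}=f)\, m_{t_i,f'}^{l-1,d_i} \bigr)$. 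The point is that the right-hand side is a fixed function of (i) the kernel weights for this layer and (ii) the tuple of children's messages — nothing else.

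For the inductive step, fix $l>1$ and assume $\mathbf{m}_t^{l-1,d}=\mathbf{u}_t^{l-1}$ for all $t\in T^{l-1}$ and all duplicates $d$. Take any two duplicates $\mathbf{x}_t^{l,d}$ and $\mathbf{x}_t^{l,d'}$ in the same layer-$l$ set. By the construction in Sec.~\ref{sec:structure} — duplicating a node duplicates its entire sub-tree — the two have isomorphic child-lists: the $i$-th child of $\mathbf{x}_t^{l,d}$ sits at the same spatial position $t_i$ in layer $l-1$ as the $i$-th child of $\mathbf{x}_t^{l,d'}$, though possibly with a different duplicate index. By the induction hypothesis those children carry messages $\mathbf{u}_{t_i}^{l-1}$ regardless of their duplicate index, so the two nodes feed identical message-tuples into the BP update; and since duplicate sets share kernel weights, the update is the same function in both cases. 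Hence $\mathbf{m}_t^{l,d}=\mathbf{m}_t^{l,d'}$, and we may call the common value $\mathbf{u}_t^l$. This closes the induction, and applying it at $l=L$ (where the root's single node is trivially its own duplicate set) gives the claim for all layers.

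The main obstacle, and the place where I would spend the most care, is making the "isomorphic sub-tree" claim airtight — i.e.\ pinning down exactly what the duplication procedure of Sec.~\ref{sec:structure} produces, so that one can genuinely assert that corresponding children of two duplicates occupy matching spatial positions and inherit matching weights down to the leaves. This is really a bookkeeping statement about the recursive construction rather than a computation, but it is the load-bearing step: once the sub-trees are known to be structurally identical, identically weighted, and identically observed at the leaves, the equality of messages is just the observation that BP's leaves-to-root pass is a deterministic function of precisely that data. It may be cleanest to strengthen the induction hypothesis to record not only that duplicates share a message but that they share an entire sub-tree isomorphism respecting positions, weights, and leaf data, and then derive the message equality as a corollary at each level.
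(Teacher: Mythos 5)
Your proposal is correct and follows essentially the same route as the paper's own proof: induction over layers, with the base case given by identical observations at the duplicated leaves and the inductive step given by the BP update being a fixed function of the shared kernel weights and the (inductively identical) children's messages. Your remark about making the sub-tree isomorphism explicit is a reasonable refinement, but the argument itself matches the paper's.
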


See Appx.~\ref{sec:proof-of-theorem-duplicates} for the proof. 
This means that only the unique messages $\mathbf{u}_{t}^{l}$ need to be calculated and stored.
Their number is equivalent to all spatial positions $T=\sum_{l\in L}T^{l}$.
Moreover, there is no further memory needed per node, only per unique message.
Thus both, runtime and memory complexity is in $\mathcal{O}(T)$.
In the special case of a 1-D DLT with a constant kernel size of $K^{l}=4$ and stride 2 (see the last paragraph of Sec.~\ref{sec:structure} and Appx.~\ref{sec:derivation-of-closed-forms}),
the number of unique messages $T$ grows linear in the number of inputs $N$, with $T=3(2^{L})-2L-3=2(N+2)\in\mathcal{O}(N)$.

Fig.~\ref{fig:Structure_messages} visualizes the message passing algorithm, including the unique
messages $\mathbf{u}_{t}^{l}$ and duplicate messages $\mathbf{m}_{t}^{l,d}$.
Assuming $w_{k,f,g}^{l-1}$ is the shared weight for all connected node pairs with the
parent $\mathbf{x}^{l,d}_t$ from layer $l$ and its $k$th child $\mathbf{x}^{l-1,e}_{t+k}$ from layer $l-1$ (where $k \in K^{l-1}$, $e\in D^{l-1}_t$, $g \in F^l$ and $f \in F^{l-1}$).
Thus $w_{k,f,g}^{l-1} = p(\mathbf{x}^{l-1,e}_{t+k} = f | \mathbf{x}^{l,d}_t = g)$
and all unique messages can be calculated as:
\begin{equation}
\label{eq:unique_messages}
u_{t,g}^{l}=\prod_{k}\sum_{f}w_{k,f,g}^{l-1}u_{t+k,f}^{l-1}
\end{equation}
\begin{figure}[htbp]
    \centering
    \includegraphics[scale=0.45]{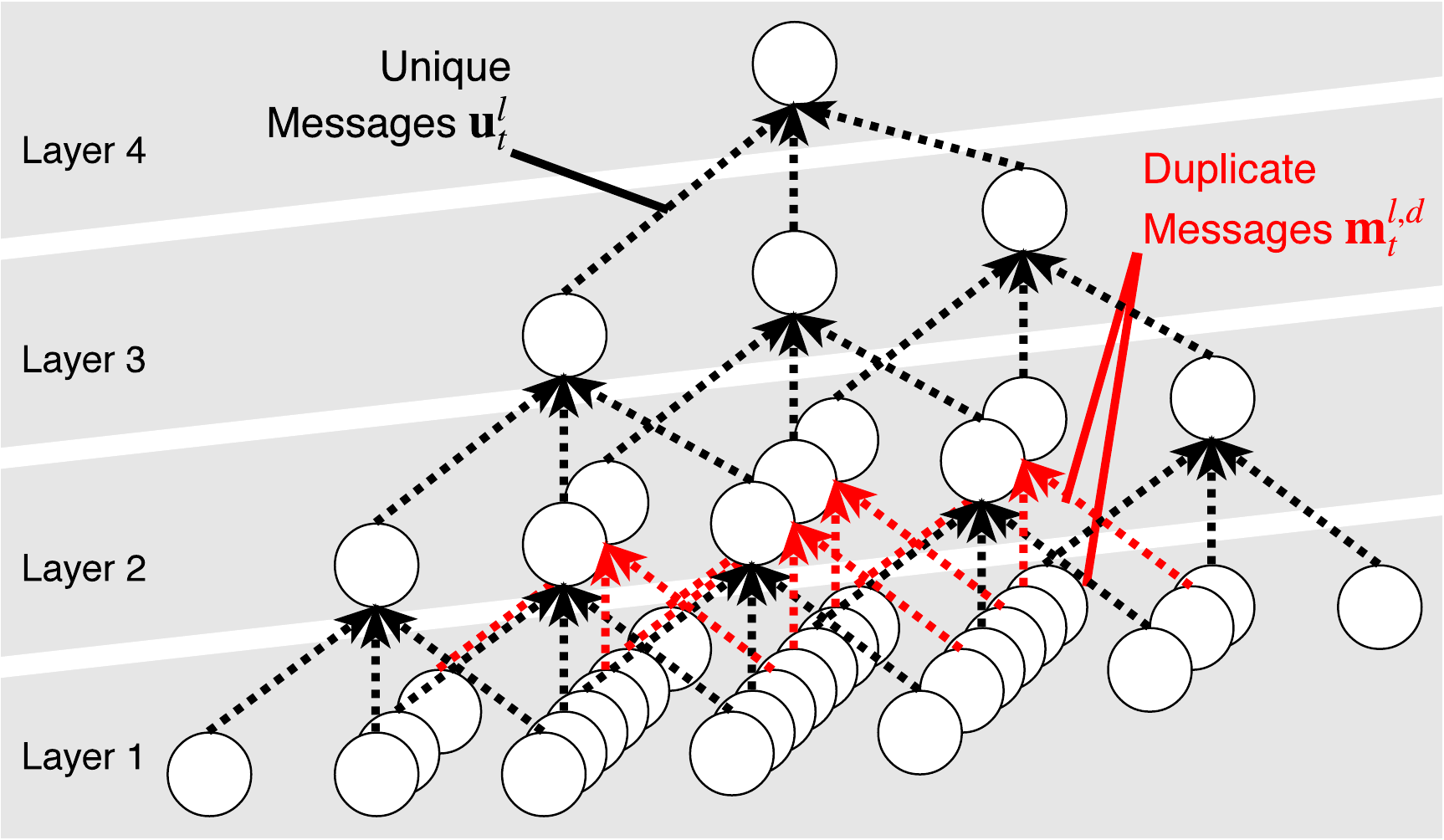}
    \caption{
        Message passing for the DLT marginal likelihood.
        Unique messages $\mathbf{u}_{t}^{l}$ that need to be calculated are shown in black.
        The red duplicate messages $\mathbf{m}_{t}^{l,d}$ have the same content, and thus no processing and no memory is needed for them.
    }
    \label{fig:Structure_messages}
\end{figure}

\subsection{Learning}\label{sec:learning}

Previous approaches for learning LT weights have applied an EM-procedure (e.g. \cite{Coi2011Learning}), where the bottom-up message passing is followed by a top-down pass to obtain the pairwise posteriors for each edge, which is then used to adjust the edge weights.
However, in the DLT case, top-down messages are not shared between nodes in a duplicate set, since the parent of each node is different.
Thus the number of top-down messages is in $\mathcal{O}(D)$, making a top-down pass intractable.

Instead, we follow a different approach:
we observe that the likelihood $\mathcal{L}$ can be calculated in a feed-forward way from the unique messages,
using only differentiable operations as defined in Eq.~\ref{eq:unique_messages}.
For numerical stability we target $\log \mathcal{L}$, and optimise it via Stochastic gradient ascent (SGA).
This also incorporates the advantages of SGA in general, like training on arbitrary large datasets.

There is one caveat: during training, we need to ensure that the weights form a valid distribution, i.e. $w_{k,f,g}^{l} \geq 0$ and $\sum_{f}w_{k,f,g}^{l}=1$ ($\forall l,k,g$).
We achieve this by re-parametrizing the model to use scores $s_{k,f,g}^{l}$ instead and defining the weights as $w_{k,f,g}^{l}=\frac{\exp(s_{k,f,g}^{l})}{\sum_{f}\exp(s_{k,f,g}^{l})}$.

\subsection{Sampling}\label{sec:sampling}
We are interested in sampling all nodes $\mathbf{X}$, conditioned on observations $\mathbf{O}\subset\mathbf{X}$.
$\mathbf{O}=\emptyset$ is possible, in which case we sample from the prior.
This is straightforward for a conventional LT:
first do an inference pass with the observations and then sample all nodes recursively, starting with the root.
Besides the problem that this would involve $\mathcal{O}(D)$ operations for DLT,
this would lead to diverging results for all duplicates.
During training, the duplicates were tied through the bottom-up message from the observed nodes, i.e. the input images.
However, there is no consistency enforced when sampling top-down,
and thus each duplicate can be in a different state.
In terms of the image pixels (i.e. the lowest layer of the model), this would lead to multiple sampled states for the same pixel.

In order to avoid these inconsistencies, we sample under the constraint
that all duplicates must take the same state. Thus we only obtain
a single sample $\mathbf{z}_{t}^{l}$ for all duplicate nodes $\mathbf{x}_{t}^{l,d}$.
This leads to the following sampling strategy: First, do an inference
pass to obtain the $\mathbf{u}_{t}^{l}$. Then start with the root
$\mathbf{x}_{1}^{L,1}$and take a sample $\mathbf{z}_{1}^{L}=(z_{1,1}^{L},...,z_{1,F^{L}}^{L})$,
$z_{1,f}^{L}\in\{0,1\}$, $\sum_{f}z_{1,f}^{L}=1$ from the posterior
$p(\mathbf{x}_{1}^{L,1}|\mathbf{O})$:
\begin{equation}
\mathbf{z}_{1}^{L}\sim\textrm{Cat}\left(\frac{u_{1,f}^{L}w_{f}^{L}}{\sum_{f}u_{1,f}^{L}w_{f}^{L}}\right)
\end{equation}
Furthermore, each $\mathbf{x}_{t}^{l,d}$ from a lower layer $l$
is sampled from the posterior $p(\mathbf{x}_{t}^{l,d}|\mathbf{y}_{t}^{l,d},\mathbf{O})$,
where $\mathbf{y}_{t}^{l,d}$ is the parent of $\mathbf{x}_{t}^{l,d}$
(and thus corresponds to one of the samples $\mathbf{z}_{t}^{l+1}$). Since
we constrain all samples to have the same state it holds $\forall d\in D_{t}^{l}:\mathbf{x}_{t}^{l,d}=\mathbf{x}_{t}^{l}$,
and thus it is sufficient to take a single sample from the product
of posteriors: $\mathbf{z}_{t}^{l}\sim\prod_{d}p(\mathbf{x}_{t}^{l}|\mathbf{y}_{t}^{l,d},\mathbf{O})$.
Furthermore, there are $K^{l}$ parents with different spatial locations
at the positions $t-k$ ($k\in K^{l}$), each of them having $D_{t-k}^{l+1}$
duplicates. Each of the parent duplicates is in the same state $\mathbf{z}_{t-k}^{l+1}$,
which leads to $\mathbf{z}_{t}^{l}\sim\prod_{k}p(\mathbf{z}_{t}^{l}|\mathbf{z}_{t-k}^{l+1},\mathbf{O})^{D_{t-k}^{l+1}}$.

Given all samples of the layer above $\mathbf{z}_{t}^{l+1}$, we first
calculate the product of the conditional posteriors $\boldsymbol{\pi}_{t}^{l}=(\pi_{t,1}^{l},...,\pi_{t,F^{l}}^{l})$
of all $D_{t}^{l}=\sum_{k}D_{t-k}^{l+1}$ parents of the set. Then
$\mathbf{z}_{t}^{l}$ is sampled from the product distribution:
\begin{align}
\pi_{t,f}^{l}= & \prod_{k}\left(u_{t,f}^{l}\sum_{g}w_{k,f,g}^{l}z_{t-k,g}^{l+1}\right)^{D_{t-k}^{l+1}}\\
\mathbf{z}_{t}^{l}\sim & \textrm{Cat}\left(\frac{\pi_{t,f}^{l}}{\sum_{f}\pi_{t,f}^{l}}\right)
\end{align}
where $g\in F^{l+1}$and $f\in F^{l}$. This sampling step is repeated
for all spatial positions $t$ and layers $l$, until the inputs $\mathbf{z}_{t}^{1}$
have been sampled. As for inference, the complexity is in $\mathcal{O}(T)$.

\section{Experiments}
\label{sec:experiments}

The goal of this section is to verify that the DLT is able to learn a hierarchical distribution of image parts from incomplete training data. We chose in-painting with randomly missing patches as example task, where the training data is corrupted via missing patches from the same distribution. We compare the DLT performance to (1) the same model trained on complete data and (2) other common generative and discriminative models. Finally, we visualise the learned hierarchy of image parts.

MNIST \cite{lecun1998mnist} digits and Fashion-MNIST \cite{xiao2017fashion} clothing items have been chosen as example data, due to its simple yet sufficiently rich structure to learn a hierarchy of parts.
The images have a resolution of 28x28 and we remove a patch of 12x12 within each image of the train and test sets.
The missing location is chosen randomly from a uniform distribution for each image.
DLT model categorical distributions as input, and thus we discretise each pixel into black and white for MNIST and 16 grey-scale values for Fashion-MNIST.
To measure the in-painting performance, we scale the pixel values to the range $[0,1]$ and then report the mean squared error (MSE) between the original and in-painted pixels within the missing patches.
Note that the class label is not used in the experiments.

We construct a 4 layer DLT, as described in Fig.~\ref{fig:Structure_experiments}.
The input images require a spatial resolution $T^1$ of 28x28 and 2 (for MNIST) or 16 (for Fashion-MNIST) states in layer 1.
Each subsequent layer has a lower spatial size, until we reach the single root, with size 1x1.
The first two kernels have a stride of 2, which leads to a reduced size of the upper layers.
The kernel sizes $K^l$ of 4x4 and 5x5 have been chosen to result in a single root at layer 4, while also providing sufficient overlap between kernels.
We increase the number of states $F^l$ in the upper layers, in order to learn increasingly complex parts. This leads to a total of 1,691,648 parameters for the kernel weights.
The DLT includes 10,651 random variables, while only 979 unique messages need to be calculated to obtain the likelihood.
The model is implemented in tensorflow \cite{Abadi2015} and trained using SGA for 100 epochs on the 60,000 training images, while using a batch size of 128.
A single inference pass takes 71ms per batch, and sampling takes 41ms per sample on a Nvidia GTX 1080Ti.
For in-painting, we fill the missing patch by taking a single sample from the model,
conditioned on the observed parts.

\begin{figure}[htbp]
    \centering
    \includegraphics[width=\textwidth]{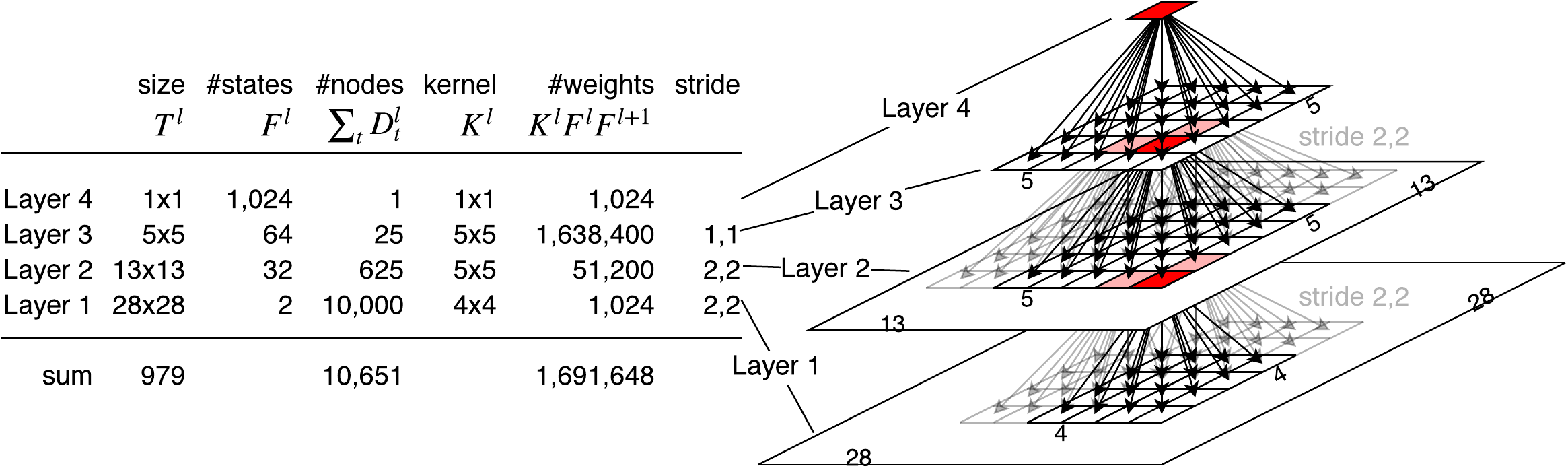}
    \caption{
        Structure of the 2-D DLT used for the MNIST experiments.
        On the right, each tile represents a spatial location, and thus there are potentially many duplicate nodes per tile.
        The conditional dependencies for the parents in red are shown as arrows, but the same pattern is repeated for every parent in every layer.
        (Note: the Fashion-MNIST model has 16 states in layer 1)
    }
    \label{fig:Structure_experiments}
\end{figure}

We compare our DLT to a Quad-tree, RBM and CNN of similar architecture.
The Quad-tree has 4 layers with shared weights and the same number of states as the DLT per layer.
Each of the nodes in layers 2 and 3 has 2x2 children (as a regular Quad-tree),
but the root has 7x7 children in order to evenly connect to a single root.
The RBM has a fully connected hidden layer with 4,096 states and missing values are handled according \cite{Hinton2010}.
The CNN is an Auto-encoder U-net with skip connections, with an architecture inspired by \cite{ronneberger2015u}.
We set the architecture of the contracting part to mirror the DLT, via selecting the same number of layers, the same kernel sizes and strides, and choosing the number of channels to be equivalent to the respective number of DLT states.
The expanding part mirrors the contracting one, using the convolution transpose operation.
The CNN (complete) is trained with incomplete images as input and the corresponding complete image as target.
In order to train the CNN with incomplete targets, we use an auto-encoder loss on the known parts of the image.

After training the DLT on incomplete MNIST, it achieves a negative log-likelihood of 1,418.56 on the test set,
which is only slightly larger than for training on complete MNIST with 1,406.78.
In comparison, the Quad-tree reaches 69.85 on incomplete and 62.18 on complete data.
The difference in magnitude is due to the size of the input:
The DLT has 10,000 observed nodes, while the Quad-tree has only 784.

Tab.~\ref{tab:inpainting_results} shows the in-painting results. 
All generative models (DLT, Quad-tree and RBM) have almost the same performance on complete and incomplete data.
This confirms the intuition that an explicit model of the input distribution helps to fill in randomly missing data.
The CNN does well as a discriminative model trained on complete data, but fails in dealing with missing parts.
The Quad-tree suffers from block-artefacts, while the DLT can produce smooth images due to the overlapping kernels.
The RBM completions are noisy, possibly due to the non-hierarchical structure.
Further qualitative results are shown in Appx.~\ref{sec:sampling-results-comparison}.

\newcommand{\raiseamount}{-0.25\height}
\newcommand{\resultscale}{0.525}
\begin{table}[htbp]
    \centering
    \caption{In-painting MSE results for different models trained on complete and incomplete data, for both, the MNIST and Fashion-MNIST datasets. Additionally, 6 example predictions for incomplete data are shown on the right. The first row shows the ground-truth, the second one shows the observed part (with the missing part in grey).}%
    \label{tab:inpainting_results}%
    \setlength{\tabcolsep}{2pt}%
    \begin{tabular}{lcccccccc}
        & \multicolumn{2}{c}{ MNIST } & & \multicolumn{2}{c}{ Fashion-MNIST } & & MNIST & Fashion-MNIST \\
        & compl. & incompl. & & compl. & incompl. & & incomplete & incomplete \\
        truth & & & & & & & \raisebox{\raiseamount}{\includegraphics[scale=\resultscale]{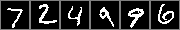}} & \raisebox{\raiseamount}{\includegraphics[scale=\resultscale]{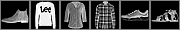}} \\
        observed & & & & & & & \raisebox{\raiseamount}{\includegraphics[scale=\resultscale]{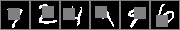}} & \raisebox{\raiseamount}{\includegraphics[scale=\resultscale]{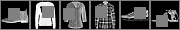}} \\
        DLT & .1496 & \textbf{.1551} & & .0650 & \textbf{.0645} & & \raisebox{\raiseamount}{\includegraphics[scale=\resultscale]{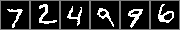}} & \raisebox{\raiseamount}{\includegraphics[scale=\resultscale]{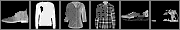}} \\
        Quad-tree & .1810 & .1817 & & .0715 & .0721 & & \raisebox{\raiseamount}{\includegraphics[scale=\resultscale]{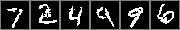}} & \raisebox{\raiseamount}{\includegraphics[scale=\resultscale]{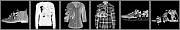}} \\
        RBM & .2002 & .2110 & & .0778 & .0859 & & \raisebox{\raiseamount}{\includegraphics[scale=\resultscale]{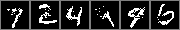}} & \raisebox{\raiseamount}{\includegraphics[scale=\resultscale]{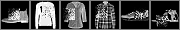}} \\
        CNN & \textbf{.1082} & .2366 & & \textbf{.0294} & .0723 & & \raisebox{\raiseamount}{\includegraphics[scale=\resultscale]{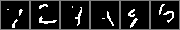}} & \raisebox{\raiseamount}{\includegraphics[scale=\resultscale]{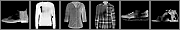}} \\
    \end{tabular}
\end{table}

Fig.~\ref{fig:parts_per_layer} shows how the DLT composes MNIST images into a hierarchy of parts.
To visualise a possible state $f$ of layer $l$, we create a new DLT from the sub-graph that has a single node from layer $l$ as root,
and setting its prior deterministic to state $f$.
Each state is then represented by a sampled image from this sub-graph.
E.g. for the third layer, this is a DLT with a single node in layer 3, 5x5 nodes in layer 2 and 12x12 spatial position in layer 1. This leads to 64 sampled images of size 12x12, one for each state.
In the top row of Fig.~\ref{fig:parts_per_layer}, we observe that layer 2 combines pixels into edge kernels at different translations and orientations.
Layer 3 learns to combine the edges of layer 2 into parts of digits.
It includes lines at different angles, curved lines, but also circles and other shapes.
Layer 4 combines these parts into various types of complete digits.

In the bottom row, we show how a specific image is sampled. We start with the root at layer 4, where one state out of 1024 is randomly selected from the prior (shown via the green arrow).
Then each of the 5x5 nodes in layer 3 is conditioned on layer 4, selecting one of its 64 possible states.
The conditional dependency is highlighted via the red boxes and arrows.
Note that the samples in layer 3 are presented on a 5x5 grid without overlap, but when combined in layer 1 there is significant overlap between the receptive fields of the upper layers.
We observe that the parts from layer 3 are sampled to form a coherent representation of the edges present in the digit 7.
Then the same process is repeated for layers 2 and 1. Fig.~\ref{fig:parts_per_layer_fashion} in Appx.~\ref{sec:sampling-results-comparison} shows the equivalent results for Fashion-MNIST.

\begin{figure}[htbp]
    \centering
    \includegraphics[width=\textwidth]{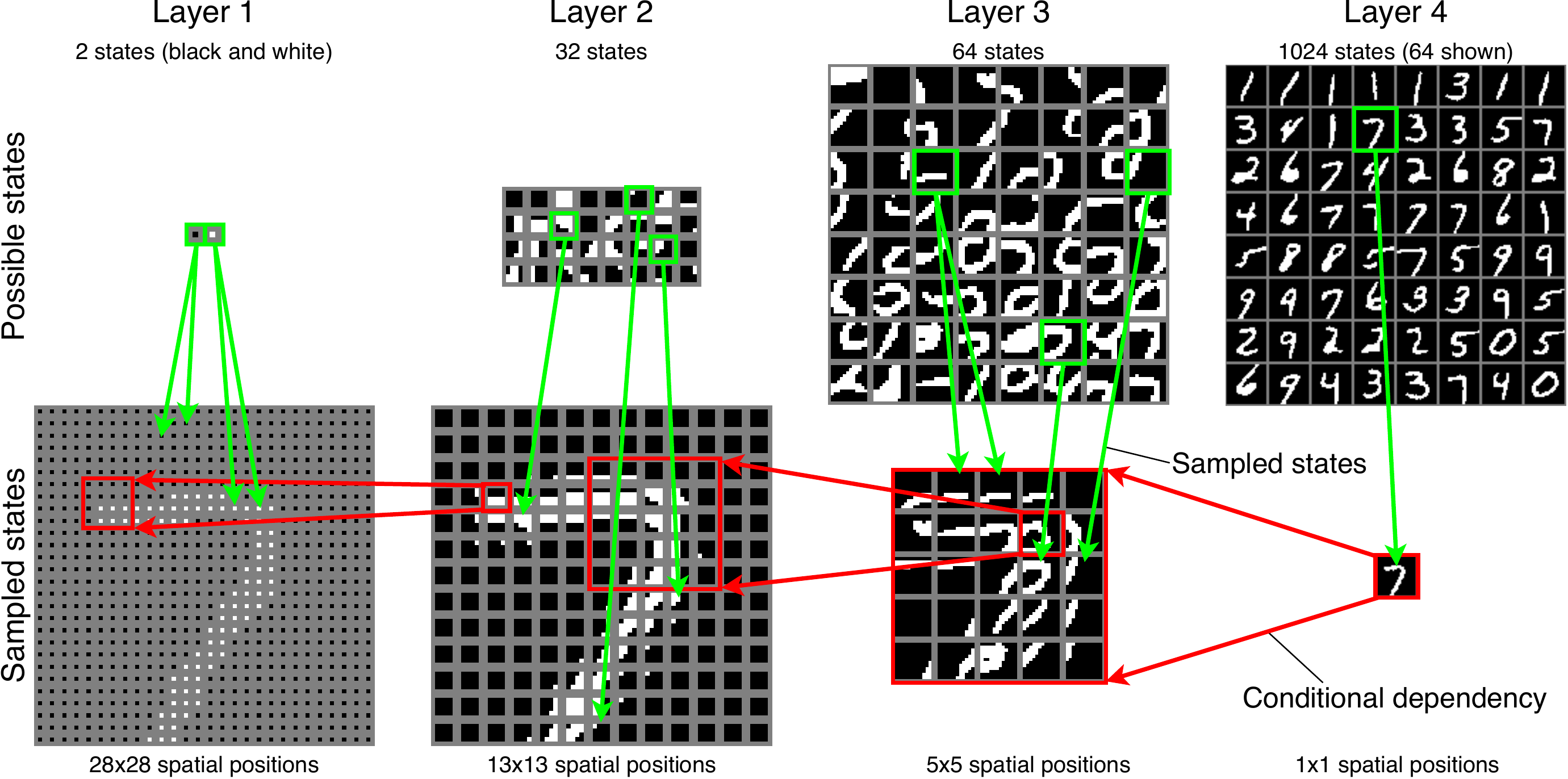}
    \caption{
        \emph{Top row:} learned DLT parts per state per layer, when trained on incomplete MNIST.
        \emph{Bottom row:} visualisation of the sampling process.
    }
    \label{fig:parts_per_layer}
\end{figure}

\section{Conclusion}
We have introduced the Dense Latent Tree and demonstrated its ability to learn a hierarchical composition of images into parts, when only incomplete data is available.
DLTs allow for efficient inference and sampling due to its tree structure, while incorporating a dense connectivity with overlapping parts that is more similar to CNNs.
Future directions may explore different connectivity patterns, like dilated convolutions and skip connections, as well as extensions to other non-categorical distributions.

{\small
    \bibliographystyle{plain}
    \bibliography{library}
}

\clearpage
\section*{Appendices for ``Imagining the Unseen: Learning a Distribution over Incomplete Images with Dense Latent Trees''}
\appendix
\section{Derivation of Closed Forms for $T^{l}$, $T$ and $D$}
We are interested in obtaining a closed form solution for the number of spatial positions per layer $T^{l}$, the total number of spatial positions $T$ and the total number of nodes $D$, given a constant kernel size $K^l=4$ and stride 2. Thus each kernel has four children and is repeated at every second position within each layer. This leads to a structure as shown in Fig.~\ref{fig:Structure_complexity} (unrolled for $L=5$ layers).
\label{sec:derivation-of-closed-forms}
\begin{figure}[htbp]
    \centering
    \includegraphics[width=\textwidth]{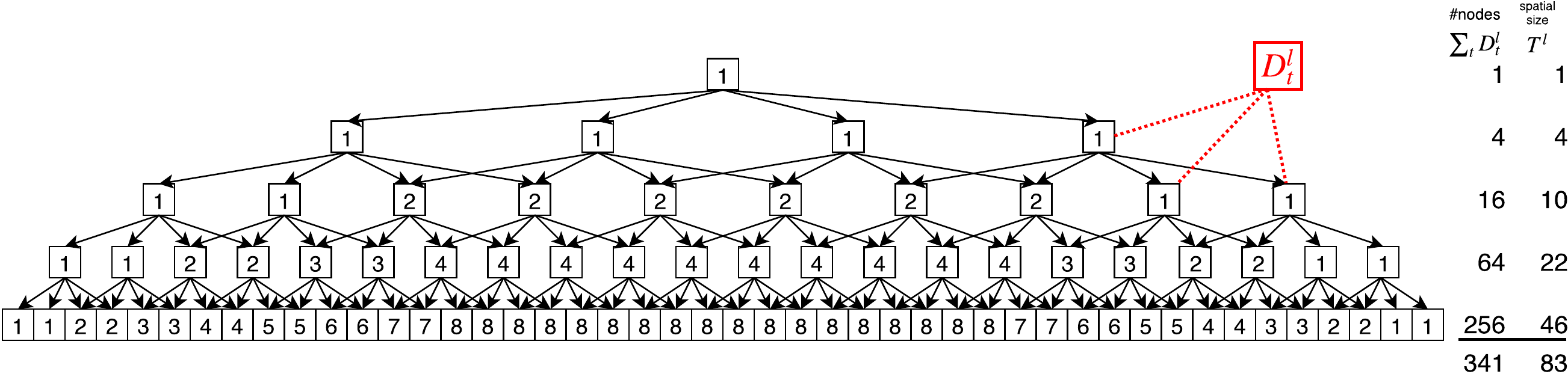}
    \caption{
        DLT with constant $K^l=4$ and stride 2. The number of duplicates $D^l_t$ is shown inside the square at each spatial position.
        For this graph, $D=\sum_{l}\sum_{t}D_{t}^{l}=341$ and $T=\sum_{l}T^{l}=83$.
    }
    \label{fig:Structure_complexity}
\end{figure}

For the derivation of these quantities, it is easier to number the
layers in reverse order starting from the root, i.e. $\hat{l}=L-l+1$.
We observe that $T^{\hat{l}}$ can be obtained via a recursive formula
as $T^{1}=1$ and $T^{\hat{l}}=2(T^{\hat{l}-1}+1)$, which leads to
a closed form of $T^{\hat{l}}=3(2^{\hat{l}-1})-2$, and thus:
\begin{equation}
T^{l}=3(2^{L-l})-2
\end{equation}
Solving for $T$ leads to:
\begin{equation}
T=\sum_{l}T^{l}=\sum_{l}3(2^{L-l})-2=3(2^{L})-2L-3
\end{equation}

Fig.~\ref{fig:Structure_complexity} shows the number of duplicates $D_{t}^{l}$ per spatial position.
We note that $D_{t}^{l}$ can be obtained by summing over its parents
as $D_{t}^{l}=\sum_{k}D_{t-k}^{l+1}$. Furthermore, the number of
nodes per layer can be derived as $\sum_{t}D_{t}^{\hat{l}}=2^{2(\hat{l}-1)}$.
Solving for $D$ leads to:
\begin{equation}
D=\sum_{l}\sum_{t}D_{t}^{l}=\sum_{\hat{l}}2^{2(\hat{l}-1)}=\frac{1}{3}(2^{2L}-1)
\end{equation}

\section{Proof of Theorem~\ref{thm:duplicates}}\label{sec:proof-of-theorem-duplicates}
\thmduplicates*
\begin{proof}
    We prove this via induction over $l$: Let
    $\mathbf{o}_{t}$ be the observed input at position $t$. By design,
    each of the input duplicates in layer 1 has the same observation,
    and thus $\forall t\in T^{1},\forall d\in D_{t}^{1}$: $\mathbf{o}_{t}=\mathbf{m}_{t}^{1,d}=\mathbf{u}_{t}^{1}$.
    Now assuming it holds that all duplicate messages in layer $l-1$
    have the same value, i.e. $\forall t\in T^{l-1},\forall d\in D_{t}^{l-1},\forall f\in F^{l-1}$:
    $m_{t,f}^{l-1,d}=u_{t,f}^{l-1}$. Then also all messages at layer
    $l$ must be identical: let $\left\{ \mathbf{m}_{t+k}^{l-1,d(k)}\right\} _{k\in K^{l-1}}$
    be the messages of the $K^{l-1}$ children of $\mathbf{x}_{t}^{l,d}$
    within layer $l-1$. Then $\forall t\in T^{l},\forall d\in D_{t}^{l},\forall g\in F^{l}$ we can calculate the BP message as:
    \begin{equation}
    m_{t,g}^{l,d}=\prod_{k}\sum_{f}w_{k,f,g}^{l-1}m_{t+k,f}^{l-1,d(k)}=\prod_{k}\sum_{f}w_{k,f,g}^{l-1}u_{t+k,f}^{l-1}=u_{t,g}^{l}
    \end{equation}
\end{proof}

\section{Additional Qualitative Results}
\label{sec:sampling-results-comparison}

Tab.~\ref{tab:inpainting_results_complete} shows the example predictions for complete data, corresponding to the results on incomplete data in Tab.~\ref{tab:inpainting_results}.
\begin{table}[htbp]
    \centering
    \caption{In-painting qualitative results for models trained on complete data. Shown are 6 example images. the first row shows the ground-truth, the second one shows the observed part (with the missing patch in grey) and the following rows show results for DLT, Quad-tree, RBM and CNN.}%
    \label{tab:inpainting_results_complete}%
    \setlength{\tabcolsep}{2pt}%
    \begin{tabular}{lcc}
        & MNIST & Fashion-MNIST \\
        & complete & complete \\
        truth & \raisebox{\raiseamount}{\includegraphics[scale=\resultscale]{inpainting_complete}} & \raisebox{\raiseamount}{\includegraphics[scale=\resultscale]{inpainting_fashion_random_complete_classnet_fashion_quantize16_missing}} \\
        observed & \raisebox{\raiseamount}{\includegraphics[scale=\resultscale]{inpainting_missing}} & \raisebox{\raiseamount}{\includegraphics[scale=\resultscale]{inpainting_fashion_random_missing_classnet_fashion_quantize16_missing}} \\
        DLT & \raisebox{\raiseamount}{\includegraphics[scale=\resultscale]{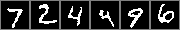}} & \raisebox{\raiseamount}{\includegraphics[scale=\resultscale]{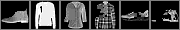}} \\
        Quad-tree & \raisebox{\raiseamount}{\includegraphics[scale=\resultscale]{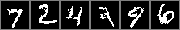}} & \raisebox{\raiseamount}{\includegraphics[scale=\resultscale]{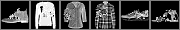}} \\
        RBM & \raisebox{\raiseamount}{\includegraphics[scale=\resultscale]{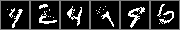}} & \raisebox{\raiseamount}{\includegraphics[scale=\resultscale]{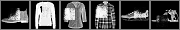}} \\
        CNN & \raisebox{\raiseamount}{\includegraphics[scale=\resultscale]{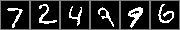}} & \raisebox{\raiseamount}{\includegraphics[scale=\resultscale]{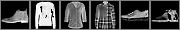}} \\
    \end{tabular}
\end{table}

Tab.~\ref{tab:inpainting_growing_results_all} shows the sampling result for different models trained on incomplete data, when a patch of varying size is observed.
In the first column on MNIST, nothing is observed and thus the DLT samples are from the prior only, which shows random digits.
In the next two columns, the model observes a few pixels in the upper left corner and hence samples random numbers, that are consistent with the observed parts.
From the 4th column on it becomes clear that the observed number must be a seven,
and thus the models samples the digit 7 with varying strokes in the lower part.
The Fashion-MNIST results show a similar pattern of randomness.
The outline and overall consistent grey-scale colouring of clothing items is visible. However, the fine texture is lost and the whole item looks smooth.

The Quad-tree shows a similar pattern than DLT, i.e. sampling random digits/clothes from the prior in the first 3 columns, but then converging towards the ground-truth.
The Quad-tree typical block-artefacts are clearly visible.

The RBM tends to sample the digit zero (for MNIST) and a jacket-like cloud (for Fashion-MNIST) when only few parts are observed. Starting with column 4, the convergence to the input image becomes visible.
Overall, the samples look more noisy than for the DLT and Quad-tree.

The CNN showed poor results when trained with incomplete targets (see Tab.~\ref{tab:inpainting_results}) and thus we include here the results for the CNN trained on complete targets instead.
This CNN had the best results with an MSE of .1082 for completing missing patches of size 12x12, i.e. in the case that test and training data corruption follows the same pattern.
In this experiment, the observed part is varying in size and thus different than in the training data.
In the first 3 columns, the CNN predicts the image as complete black. This is probably due to the non-generative nature of this CNN, which cannot sample from a prior.
In the 4th and 5th column, when the other models clearly pick up the tendency towards the target, the CNN is still not completing the image in a plausible way.
Thus the CNN does well for the exact task that it has been trained on, but poorly generalises to similar in-painting tasks.

\begin{table}[htbp]
    \centering
    \caption{Sampling results for an observed part of varying size. The first row shows the ground-truth image, the second one the observed parts, and the last 5 rows show different samples. The CNN output is deterministic, thus we show only one prediction.}
    \label{tab:inpainting_growing_results_all}
    \setlength{\tabcolsep}{2pt}
\begin{tabular}{lccc}
    \midrule
    & \multicolumn{3}{c}{ MNIST } \\
    \midrule
    & DLT (incomplete) & Quad-tree (incomplete) & RBM (incomplete) \\
    truth & \raisebox{\raiseamount}{\includegraphics[scale=\resultscale]{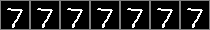}} & \raisebox{\raiseamount}{\includegraphics[scale=\resultscale]{inpainting_growing_complete_classnet_strided2_woclass_missing}} & \raisebox{\raiseamount}{\includegraphics[scale=\resultscale]{inpainting_growing_complete_classnet_strided2_woclass_missing}} \\
    observed & \raisebox{\raiseamount}{\includegraphics[scale=\resultscale]{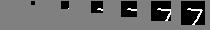}} & \raisebox{\raiseamount}{\includegraphics[scale=\resultscale]{inpainting_growing_missing_classnet_strided2_woclass_missing}} & \raisebox{\raiseamount}{\includegraphics[scale=\resultscale]{inpainting_growing_missing_classnet_strided2_woclass_missing}} \\
    sample 1 & \raisebox{\raiseamount}{\includegraphics[scale=\resultscale]{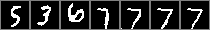}} & \raisebox{\raiseamount}{\includegraphics[scale=\resultscale]{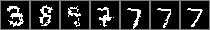}} & \raisebox{\raiseamount}{\includegraphics[scale=\resultscale]{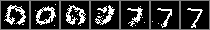}} \\
    sample 2 & \raisebox{\raiseamount}{\includegraphics[scale=\resultscale]{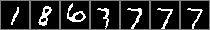}} & \raisebox{\raiseamount}{\includegraphics[scale=\resultscale]{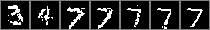}} & \raisebox{\raiseamount}{\includegraphics[scale=\resultscale]{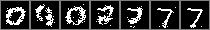}} \\
    sample 3 & \raisebox{\raiseamount}{\includegraphics[scale=\resultscale]{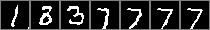}} & \raisebox{\raiseamount}{\includegraphics[scale=\resultscale]{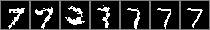}} & \raisebox{\raiseamount}{\includegraphics[scale=\resultscale]{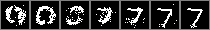}} \\
    sample 4 & \raisebox{\raiseamount}{\includegraphics[scale=\resultscale]{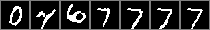}} & \raisebox{\raiseamount}{\includegraphics[scale=\resultscale]{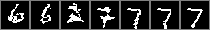}} & \raisebox{\raiseamount}{\includegraphics[scale=\resultscale]{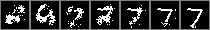}} \\
    sample 5 & \raisebox{\raiseamount}{\includegraphics[scale=\resultscale]{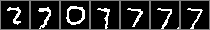}} & \raisebox{\raiseamount}{\includegraphics[scale=\resultscale]{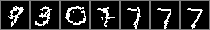}} & \raisebox{\raiseamount}{\includegraphics[scale=\resultscale]{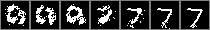}} \\
    & & & \\
    & CNN (complete) & & \\
    truth & \raisebox{\raiseamount}{\includegraphics[scale=\resultscale]{inpainting_growing_complete_classnet_strided2_woclass_missing}} & & \\
    observed & \raisebox{\raiseamount}{\includegraphics[scale=\resultscale]{inpainting_growing_missing_classnet_strided2_woclass_missing}} & & \\
    prediction & \raisebox{\raiseamount}{\includegraphics[scale=\resultscale]{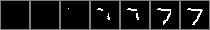}} & & \\
    & & & \\
    \midrule
    & \multicolumn{3}{c}{ Fashion-MNIST } \\
    \midrule
    & DLT (incomplete) & Quad-tree (incomplete) & RBM (incomplete) \\
    truth & \raisebox{\raiseamount}{\includegraphics[scale=\resultscale]{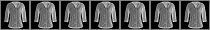}} & \raisebox{\raiseamount}{\includegraphics[scale=\resultscale]{inpainting_growing_fashion_complete_classnet_fashion_quantize16_missing}} & \raisebox{\raiseamount}{\includegraphics[scale=\resultscale]{inpainting_growing_fashion_complete_classnet_fashion_quantize16_missing}} \\
    observed & \raisebox{\raiseamount}{\includegraphics[scale=\resultscale]{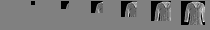}} & \raisebox{\raiseamount}{\includegraphics[scale=\resultscale]{inpainting_growing_fashion_missing_classnet_fashion_quantize16_missing}} & \raisebox{\raiseamount}{\includegraphics[scale=\resultscale]{inpainting_growing_fashion_missing_classnet_fashion_quantize16_missing}} \\
    sample 1 & \raisebox{\raiseamount}{\includegraphics[scale=\resultscale]{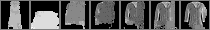}} & \raisebox{\raiseamount}{\includegraphics[scale=\resultscale]{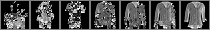}} & \raisebox{\raiseamount}{\includegraphics[scale=\resultscale]{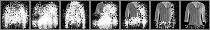}} \\
    sample 2 & \raisebox{\raiseamount}{\includegraphics[scale=\resultscale]{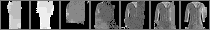}} & \raisebox{\raiseamount}{\includegraphics[scale=\resultscale]{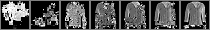}} & \raisebox{\raiseamount}{\includegraphics[scale=\resultscale]{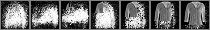}} \\
    sample 3 & \raisebox{\raiseamount}{\includegraphics[scale=\resultscale]{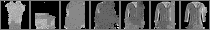}} & \raisebox{\raiseamount}{\includegraphics[scale=\resultscale]{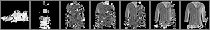}} & \raisebox{\raiseamount}{\includegraphics[scale=\resultscale]{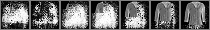}} \\
    sample 4 & \raisebox{\raiseamount}{\includegraphics[scale=\resultscale]{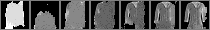}} & \raisebox{\raiseamount}{\includegraphics[scale=\resultscale]{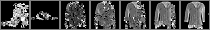}} & \raisebox{\raiseamount}{\includegraphics[scale=\resultscale]{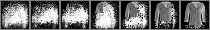}} \\
    sample 5 & \raisebox{\raiseamount}{\includegraphics[scale=\resultscale]{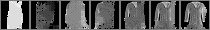}} & \raisebox{\raiseamount}{\includegraphics[scale=\resultscale]{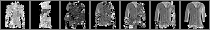}} & \raisebox{\raiseamount}{\includegraphics[scale=\resultscale]{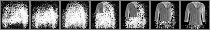}} \\
    & & & \\
    & CNN (complete) & & \\
    truth & \raisebox{\raiseamount}{\includegraphics[scale=\resultscale]{inpainting_growing_fashion_complete_classnet_fashion_quantize16_missing}} & & \\
    observed & \raisebox{\raiseamount}{\includegraphics[scale=\resultscale]{inpainting_growing_fashion_missing_classnet_fashion_quantize16_missing}} & & \\
    prediction & \raisebox{\raiseamount}{\includegraphics[scale=\resultscale]{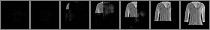}} & & \\
\end{tabular}
\end{table}

\begin{figure}[htbp]
    \centering
    \includegraphics[width=\textwidth]{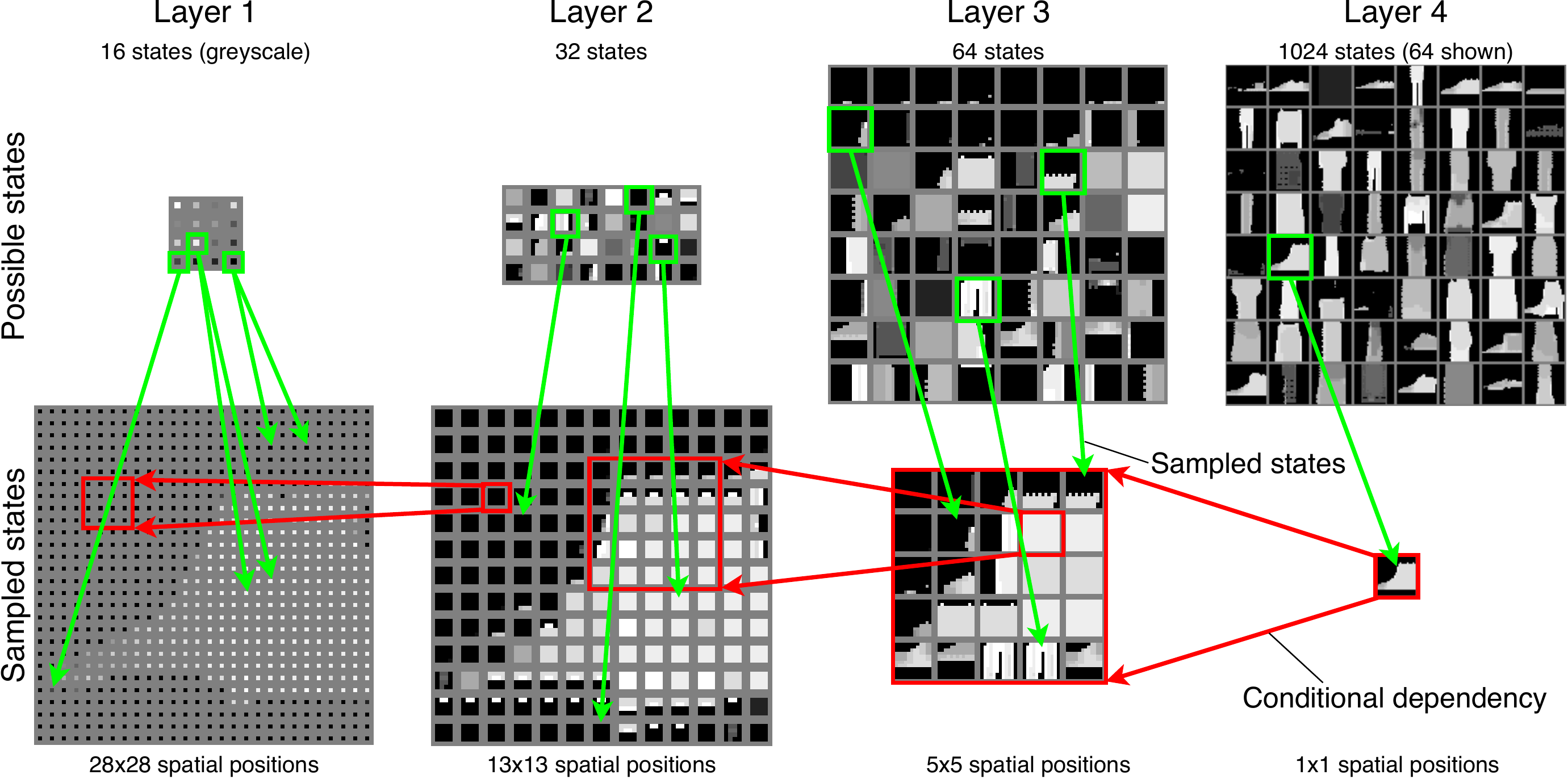}
    \caption{
        \emph{Top row:} learned DLT parts per state per layer, when trained on incomplete Fashion-MNIST.
        \emph{Bottom row:} visualisation of the sampling process.
    }
    \label{fig:parts_per_layer_fashion}
\end{figure}

\end{document}